\newtheorem{dfn}{Definition}
\newtheorem{thm}{Theorem}
\newtheorem{asm}{Assumption}
\newtheorem{lmm}{Lemma}
\author[1]{Masahiro Kato}
\author[1,2]{Liyuan Xu}
\author[2]{Gang Niu}
\author[2,1]{Masashi Sugiyama}
\affil[1]{The University of Tokyo}
\affil[2]{RIKEN}
\date{}
\begin{document}
%
\title{Alternate Estimation of a Classifier and the Class-Prior\\
from Positive and Unlabeled Data}
\maketitle
\begin{abstract}
We consider a problem of learning a binary classifier only from \emph{positive} data and \emph{unlabeled} data (\emph{PU learning}) and estimating the class-prior in \emph{unlabeled} data under the \emph{case-control} scenario.
Most of the recent methods of PU learning require an estimate of the \emph{class-prior probability} in unlabeled data,
and it is estimated in advance with another method.
However, such a two-step approach which first estimates the class-prior and then trains a classifier
may not be the optimal approach since the estimation error of the class-prior is not taken into account
when a classifier is trained.
In this paper, we propose a novel unified approach to estimating the class-prior and training a classifier \emph{alternately}.
Our proposed method is simple to implement and computationally efficient.
Through experiments, we demonstrate the practical usefulness of the proposed method.
\end{abstract} 

\section{Introduction}
We consider the problem of learning a binary classifier only from positive data and unlabeled data (PU learning). This problem arises in various practical situations, such as information retrieval and outlier detection \citep{elkan2008learning,ward2009presence,pmlr-v5-scott09a,blanchard2010semi,li2009positive,nguyen2011positive}. One of the theoretical milestones of PU learning is \citet{elkan2008learning} and there are subsequent researches called \emph{ unbiased PU learning} \citep{IEICE:duPlessis+Sugiyama:2014,ICML:duPlessis+etal:2015}, where the classification risk is estimated in an unbiased manner only from PU data.

We consider the \emph{case-control} scenario \citep{ward2009presence,elkan2008learning}, where positive data are obtained separately from unlabeled data and unlabeled data is sampled from the whole population. Under this setting, the true class-prior $\pi=p(y=+1)$ in unlabeled data is needed for the formulation of unbiased PU learning. However, since $\pi$ is often unknown in practice, methods for \emph{class-prior estimation in PU learning} have been developed. 

There are several existing methods for class-prior estimation in PU learning. \citet{elkan2008learning} is also a milestone of  class-prior estimation. Recently, \citet{christoffel2016class} proposed another method based on properly penalized Pearson divergences for class-prior estimation. \citet{pmlr-v48-ramaswamy16} gave a method for class-prior estimation based on kernel mean embedding. Furthermore, \citet{jain2016nonparametric} developed a method based on maximum likelihood estimation of the mixture proportion. At present, the method proposed by \citet{pmlr-v48-ramaswamy16} is reported as the best method in the performance for class-prior estimation.

While many methods have been proposed in both fields of PU learning and class-prior estimation, they considered estimating the class-prior and learning a classifier separately. However, such a two-step approach of first estimating the class-prior and then training a classifier may not be the best approach---an estimation error of the class-prior in the first step is not taken into account when a classifier is trained later. In this paper, we propose a method of \emph{alternately} estimating the class-prior and a classifier. The proposed algorithm is simple to implement, and it requires low computational costs. Experiments show that our proposed method is promising compared with existing approaches.

\section{Formulation of PU Learning}
In this section, we formulate the problem of PU learning and describe an unbiased PU learning method.

Suppose that we have a positive dataset and an unlabeled dataset i.i.d. as 
\begin{align*}
\label{data}
&\{\bm{x}_i\}_{i=1}^{n}\sim p(\bm{x}|y=+1),\nonumber\\
&\{\bm{x}'_i\}_{i=1}^{n'}\sim p(\bm{x})=\pi p(\bm{x}|y=+1)+(1-\pi)p(\bm{x}|y=-1),
\end{align*}
where $p(\bm{x}|y)$ is the class-conditional density, $p(\bm{x})$ is the marginal density, and $\pi=p(y=+1)$ is the unknown class-prior for the positive class. We assume that $\bm{x}_i$ and $\bm{x}'_i$ belong to a compact input space $\mathcal{X}$.
Let us define a classifier $h:\mathcal{X}\to \{-1,1\}$ as $h(\bm{x}) = {\tt sign} (f(\bm{x})-0.5)$, where $f(\bm{x})$ is a score function, i.e., $f:\mathcal{X}\to (0, 1)$, and ${\tt sign}(y)$ denotes the sign of $y$. Our goal is to obtain a classifier $h$ only from a positive dataset and an unlabeled dataset. 

The optimal classifier $f$ is given by minimizing the following functional on $f$ called the classification risk:
\[
R(f)=\mathbb{E}[\ell(f(X),Y)],
\]
where $\ell: \mathbb{R}\times \{\pm1\}\to \mathbb{R}$ is a loss function and $\mathbb{E}$ denotes the expectation over the unknown joint density $p(\bm{x},y)$.

\citet{ICML:duPlessis+etal:2015} showed that the risk can be expressed only with the positive and unlabeled data as
\begin{align*}
&R(f)=\pi \mathbb{E}_1[\ell(f(X), +1)] - \pi \mathbb{E}_1[\ell(f(X), -1)] +\mathbb{E}_X[\ell(f(X), -1)],
\end{align*}
where $\mathbb{E}_1$ and $\mathbb{E}_X$ are the expectations over $p(\bm{x}|y=+1)$ and $p(\bm{x})$ respectively. From this expression, we can easily obtain an unbiased estimator of the classification risk from empirical data, by simply replacing the expectations by the corresponding sample averages.

We use the logarithmic loss function, i.e., $\ell(f(\bm{x}), +1))=-\log(f(\bm{x}))$ and $\ell(f(\bm{x}), -1)=-\log(1-f(\bm{x}))$ for $\ell$. Then the above risk can be expressed as
\begin{align}
\label{basic}
&R(f) = -\pi\mathbb{E}_{1} [\log(f(X))] + \pi\mathbb{E}_1[\log(1-f(X))]-\mathbb{E}_{X}[\log(1-f(X))].
\end{align}
We can derive some other loss functions from the logarithmic loss function and $f(\bm{x})$. For example, if we use the sigmoid function as the score function, i.e., $f(\bm{x})=\frac{1}{1+\exp(-g(\bm{x}))}$, where $g$ is a function such that $g:\mathcal{X}\rightarrow \mathbb{R}$. In this case, the loss becomes the logistic loss: $\ell(f(\bm{x}), +1))=\log(1 + \exp(-g(\bm{x})))$ and $\ell(f(\bm{x}), -1))=\log(1 + \exp(g(\bm{x})))$. 

\section{Alternate Estimation}
In this section, we propose our algorithm for learning a classifier and estimating the class-prior alternately only from PU data.

Let us regard the class-prior as a parameter and denote it by $\kappa$. We define the risk $R^{\kappa}(f)$ as
\begin{align*}
&R^{\kappa}(f)= -\kappa\mathbb{E}_{1} [\log(f(X))] + \kappa\mathbb{E}_1[\log(1-f(X))]-\mathbb{E}_{X}[\log(1-f(X))],
\end{align*}
and denote its empirical version by $\hat{R}^{\kappa}(f)$.

\subsection{Algorithm in Population}
In this subsection, we assume an access to infinite samples and propose an algorithm in population. After the theoretical analysis of the algorithm in population, we also define the empirical version of the algorithm. The algorithm trains a classifier $f$ and estimates the class-prior $\pi$ alternately by iterating the following steps:
\begin{itemize}
\item Given estimated class-prior $\pi^{*}$, train a classifier $h$ by finding $f$ that minimizes the empirical risk $R^{\pi^{*}}(f)$.
\item Let us denote the minimizer as $f^{*}$. Treat $f^{*}$ as an approximation of $p(y=+1|\bm{x})$ and update $\pi^{*}$ by taking the expectation of $f^{*}$ over unlabeled data.
\end{itemize}
We denote the estimated class-prior after the $k$th iteration as $\pi^{(k)}$. In the initial step, we set an initial prior $\pi^{(0)}$ such that $\pi^{(0)} > \pi$. Then, we iterate the above alternate estimation until convergence. Intuitively, this is based on the following relationship for the true conditional probability $p(y=+1|\bm{x})$.
\begin{align*}
\pi = \int p(y=+1|\bm{x}) p(\bm{x}) \mathrm{d}\bm{x}.
\end{align*}

In the rest of this section, we theoretically justify this estimation procedure.

\subsection{Convergence to Non-negative Class-prior}
In this subsection, we investigate theoretical properties of our algorithm in population for the true risk minimizer. 

\paragraph{Assumptions:} Firstly we put the following assumptions on the classification model $f$, $p(y=+1|\bm{x})$, $p(\bm{x})$ and $p(\bm{x}|y=+1)$. Let us denote the function space of $f$ by $\mathcal{F}$.
\begin{asm}
\label{asm1}
$f$ is a continuously differentiable function $f:\mathbb{R}^{d}\rightarrow (0, 1-\epsilon]$. 
\end{asm}
\begin{asm}
\label{asm2}
Probability density function $p(\bm{x})$ is $L_1$-Lipchitz continuous, and $p(\bm{x}|y=+1)$ is $L_2$-Lipchitz continuous. This means that the following inequalities hold for all $x,x' \in \mathbb{R}^d$:
\begin{align*}
    &|p(\bm{x}) - p(\bm{x}')| \leq L_1 \|\bm{x} - \bm{x}'\|_1,\\
    &|p(\bm{x}|y=+1) - p(\bm{x}'|y=+1)| \leq L_2 \|\bm{x} - \bm{x}'\|_1,
\end{align*}
where $\|\cdot\|_1$ denotes the $l_1$-norm.
\end{asm}
We use Assumption 1 to derive a stationary point based on the Euler-Lagrange equation. Assumption 2 guarantees convergence to the true class-prior by assuming the smoothness of the probability distributions.

Then, we introduce the following quantity $\pi_{\text{max}}$, which represents the largest possible class-prior given the positive and unlabeled data.
\begin{dfn}[Non-negative class-prior]
Let us define $\pi_{\mathrm{max}}$ as follows:
\begin{align*}
    \pi_{\mathrm{max}} = \sup \{\alpha | \forall \bm{x},~ (1-\epsilon)p(\bm{x}) \geq \alpha p(\bm{x}|y=+1)\}.
\end{align*}
\end{dfn}
As \citet{ward2009presence} proved, if we do not put any assumptions, class-prior estimation
in the case-control scenario is an intractable task. Hence, we also put the following assumption on the true class-prior.
\begin{asm}
\label{asm3}
\begin{align*}
\pi = \pi_{\max}
\end{align*}
\end{asm}

Assumption 3 makes us possible to identify the class-prior. The idea of estimating $\pi_{\mathrm{max}}$ shared with other existing methods such as \citet{christoffel2016class}.

Next, we prove that our algorithm defined with population converges to $\pi_{\mathrm{max}}$ when $\hat{\pi}^{(0)}$ is large enough. 

\subsubsection{Theoretical Convergence Guarantee:}
Here, we prove our main theorem which guarantees the convergence of our class-prior estimator to the true class-prior. Given an estimator of the class-prior $\pi^{(k)}$ at the $k$th step, the optimization problem for learning a classifier is written as follows:
\begin{align}
\label{OPT}
&\min\ R^{\pi^{(k)}}(f)\nonumber\\
&\mathrm{s.t.} \ \ \ 0 \leq f \leq 1-\epsilon.
\end{align}
Let us denote the solution of problem (\ref{OPT}) as $f^{(k+1)*}$. Then, we obtain a new estimator of class-prior $\pi^{(k+1)}$ as follows:
\begin{align*}
\pi^{(k+1)} = \int f^{(k+1)*}p(\bm{x})\mathrm{d}\bm{x}.
\end{align*}

In order to prove the theorem, we prove the following two lemmas. The proofs of the lemmas are shown in Appendix.
\begin{lmm}\label{lemma1} Suppose that Assumption 1 holds. The stationary point of the risk minimization problem is given as follows:
\begin{align*}
        f^{(k+1)*}(\bm{x}) =  \begin{cases}
          \frac{\pi^{(k)}p(\bm{x}|y=+1)}{p(\bm{x})} &(\bm{x}\in D^{(k)}),\\
           1-\epsilon &(\bm{x}\notin D^{(k)}),
        \end{cases}
    \end{align*}
  where $D^{(k)} = \{x|\pi^{(k)}p(\bm{x}|y=1) \leq (1-\epsilon)p(\bm{x}) \}$.
\end{lmm}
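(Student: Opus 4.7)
The plan is to treat problem~(\ref{OPT}) as a pointwise-separable variational problem. Writing
\[
R^{\pi^{(k)}}(f) = \int \Phi(\bm{x}, f(\bm{x}))\,\mathrm{d}\bm{x}, \qquad \Phi(\bm{x}, u) := -\pi^{(k)} p(\bm{x}|y=+1)\log u + \bigl[\pi^{(k)} p(\bm{x}|y=+1) - p(\bm{x})\bigr]\log(1-u),
\]
and noting that $\Phi$ depends on $f(\bm{x})$ but on none of its derivatives, the Euler--Lagrange equation alluded to under Assumption~\ref{asm1} reduces, at each $\bm{x}$ separately, to the one-dimensional constrained minimization $\min_{u \in (0,1-\epsilon]} \Phi(\bm{x}, u)$.

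First I would compute the interior first-order condition. Differentiating gives
\[
\frac{\partial \Phi}{\partial u}(\bm{x}, u) = -\frac{\pi^{(k)} p(\bm{x}|y=+1)}{u} + \frac{p(\bm{x}) - \pi^{(k)} p(\bm{x}|y=+1)}{1-u},
\]
and clearing denominators in $\partial \Phi/\partial u = 0$ yields the unique interior stationary candidate $u^{\star}(\bm{x}) = \pi^{(k)} p(\bm{x}|y=+1)/p(\bm{x})$, which is exactly the first branch of the claimed formula. By the definition of $D^{(k)}$, this $u^{\star}(\bm{x})$ lies in the feasible interval $(0, 1-\epsilon]$ precisely when $\bm{x}\in D^{(k)}$. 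On that set $p(\bm{x}) \ge \pi^{(k)} p(\bm{x}|y=+1)$, so $\partial^2\Phi/\partial u^2 = \pi^{(k)} p(\bm{x}|y=+1)/u^2 + [p(\bm{x}) - \pi^{(k)} p(\bm{x}|y=+1)]/(1-u)^2 > 0$, the function $\Phi(\bm{x},\cdot)$ is strictly convex in $u$, and $u^{\star}(\bm{x})$ is the pointwise minimizer.

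For $\bm{x}\notin D^{(k)}$ the interior candidate exceeds $1-\epsilon$ and is infeasible, so I would show the minimum is attained at the upper boundary $u = 1-\epsilon$ by a direct sign analysis of $\partial \Phi/\partial u$ on $(0, 1-\epsilon]$. In the subcase $\pi^{(k)} p(\bm{x}|y=+1) \ge p(\bm{x})$, both summands of $\partial \Phi/\partial u$ are nonpositive with the first strictly so, hence $\Phi(\bm{x},\cdot)$ is strictly decreasing on $(0, 1-\epsilon]$. In the complementary subcase $(1-\epsilon) p(\bm{x}) < \pi^{(k)} p(\bm{x}|y=+1) < p(\bm{x})$, the same second-derivative formula gives strict convexity, and since the unique interior minimizer $u^{\star}(\bm{x})$ sits strictly above $1-\epsilon$, $\Phi(\bm{x},\cdot)$ is still strictly decreasing on the feasible interval. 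In either subcase the constrained minimum equals $1-\epsilon$, giving the second branch.

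The main obstacle, I expect, is precisely this boundary analysis outside $D^{(k)}$: once $\pi^{(k)} p(\bm{x}|y=+1)$ exceeds $p(\bm{x})$ the integrand $\Phi(\bm{x},\cdot)$ is no longer convex, so one cannot appeal to convexity wholesale to justify clipping at $1-\epsilon$; the argument must go through the explicit sign computation above. Everything else---the Euler--Lagrange reduction and the interior case---follows routinely once the pointwise decoupling is observed.
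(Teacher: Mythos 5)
Your proof is correct, and it takes a genuinely different route from the paper's. The paper proves Lemma~\ref{lemma1} by introducing Lagrange multiplier functions $\alpha(\bm{x})$, $\beta(\bm{x})$ for the constraints $f\le 1-\epsilon$ and $f\ge 0$, passing to the dual via a claimed convexity of $R^{\pi^{(k)}}(f)$, applying the Euler--Lagrange equation to the Lagrangian to get a quadratic identity in $f^{(k+1)*}(\bm{x})$, and then enumerating the KKT cases ($\alpha=\beta=0$, $\alpha>0$, $\beta>0$) to recover the two branches. You instead exploit the fact that the functional contains no derivatives of $f$ and the constraint is pointwise, so the problem decouples into the scalar minimization $\min_{u\in(0,1-\epsilon]}\Phi(\bm{x},u)$ at each $\bm{x}$, which you solve by a first-order condition plus an explicit sign analysis of $\partial\Phi/\partial u$ on the feasible interval. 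Your computations check out: the interior critical point is $\pi^{(k)}p(\bm{x}|y=+1)/p(\bm{x})$, it is feasible exactly on $D^{(k)}$ where $\Phi(\bm{x},\cdot)$ is strictly convex, and outside $D^{(k)}$ your two subcases both show $\Phi(\bm{x},\cdot)$ strictly decreasing on $(0,1-\epsilon]$, forcing the boundary value $1-\epsilon$. What your approach buys is that it is more elementary (no duality, no multiplier functions) and, on one point, more careful: the paper justifies swapping $\inf$ and $\sup$ by asserting convexity of $R^{\pi^{(k)}}(f)$ in $f$, but as you correctly observe the integrand is \emph{not} convex in $u$ on the feasible interval once $\pi^{(k)}p(\bm{x}|y=+1)>p(\bm{x})$; your monotonicity argument handles that regime directly without appealing to strong duality. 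The only caveat shared by both proofs is that the resulting $f^{(k+1)*}$ is a formal pointwise optimizer rather than a member of the smooth class described in Assumption~\ref{asm1}, so "stationary point" is being used in the same loose sense in both arguments.
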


\begin{lmm}\label{lemma2}
If $\pi^{(k)} > \pi_{\mathrm{max}}$, then we have 
\begin{align*}
    & \pi^{(k)} - \pi^{(k+1)} = \int \left({\pi^{(k)}p(\bm{x}|y=+1)} - (1-\epsilon)p(\bm{x})\right)_+ \mathrm{d}\bm{x},
\end{align*}
where $(a)_+ = \max(a,0)$.
\end{lmm}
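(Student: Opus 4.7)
The plan is to substitute the explicit form of $f^{(k+1)*}$ from Lemma~\ref{lemma1} into the definition $\pi^{(k+1)} = \int f^{(k+1)*}(\bm{x}) p(\bm{x}) \, \mathrm{d}\bm{x}$, then manipulate the difference $\pi^{(k)} - \pi^{(k+1)}$ by splitting every integral along $D^{(k)}$ and its complement $D^{(k)c}$.

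First, I split $\pi^{(k+1)}$ as
\begin{align*}
\pi^{(k+1)} = \int_{D^{(k)}} \frac{\pi^{(k)} p(\bm{x}|y=+1)}{p(\bm{x})} p(\bm{x}) \, \mathrm{d}\bm{x} + \int_{D^{(k)c}} (1-\epsilon) p(\bm{x}) \, \mathrm{d}\bm{x},
\end{align*}
so the integrand collapses to $\pi^{(k)} p(\bm{x}|y=+1)$ on $D^{(k)}$ and to $(1-\epsilon)p(\bm{x})$ off $D^{(k)}$. Next I rewrite $\pi^{(k)}$ using $\int p(\bm{x}|y=+1)\,\mathrm{d}\bm{x}=1$ and split the same way, obtaining $\pi^{(k)} = \int_{D^{(k)}} \pi^{(k)} p(\bm{x}|y=+1)\,\mathrm{d}\bm{x} + \int_{D^{(k)c}} \pi^{(k)} p(\bm{x}|y=+1)\,\mathrm{d}\bm{x}$.

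Subtracting, the $D^{(k)}$ contributions cancel and I am left with
\begin{align*}
\pi^{(k)} - \pi^{(k+1)} = \int_{D^{(k)c}} \bigl( \pi^{(k)} p(\bm{x}|y=+1) - (1-\epsilon) p(\bm{x}) \bigr) \, \mathrm{d}\bm{x}.
\end{align*}
By the definition of $D^{(k)}$, on its complement the integrand is strictly positive, and on $D^{(k)}$ itself the integrand is $\leq 0$, so extending the integration domain to the whole space while replacing the integrand by its positive part $(\,\cdot\,)_+$ does not alter the value. This yields exactly the claimed identity.

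The hypothesis $\pi^{(k)} > \pi_{\mathrm{max}}$ is what guarantees that the set $D^{(k)c} = \{\bm{x} : \pi^{(k)} p(\bm{x}|y=+1) > (1-\epsilon) p(\bm{x})\}$ has positive Lebesgue measure; indeed, by the definition of $\pi_{\mathrm{max}}$ as a supremum, no $\alpha > \pi_{\mathrm{max}}$ satisfies $(1-\epsilon)p(\bm{x}) \geq \alpha p(\bm{x}|y=+1)$ for all $\bm{x}$, so the integrand on the right-hand side is not almost-everywhere zero. The only mild technical subtlety is ensuring the measurability and absolute integrability of the objects involved before the splitting step; this is where Assumption~\ref{asm2} (Lipschitz continuity of the densities) and the range constraint $f \leq 1-\epsilon$ from Assumption~\ref{asm1} come in. Apart from that, the argument is a clean bookkeeping computation rather than a step with a genuine obstacle.
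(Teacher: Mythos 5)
Your proof is correct and follows essentially the same route as the paper's: substitute the closed form of $f^{(k+1)*}$ from Lemma~\ref{lemma1}, split the integral over $D^{(k)}$ and its complement so the $D^{(k)}$ contributions cancel, and observe that the remaining integral over the complement equals the positive-part integral over the whole space. Your added remarks on why the hypothesis $\pi^{(k)} > \pi_{\mathrm{max}}$ matters (positivity of the right-hand side) mirror the closing observation in the paper's proof.
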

Lemma \ref{lemma1} implies that the function given as a stationary point approaches to $\frac{\pi^{(k)}p(\bm{x}|y=+1)}{p(\bm{x})}$, but it is truncated by $1$ if $\frac{\pi^{(k)}p(\bm{x}|y=+1)}{p(\bm{x})} \geq 1$. This lemma explicates the behavior of a classifier trained under an incorrectly estimated class-prior. Lemma \ref{lemma2} shows the difference between an estimated class-prior used
for training a classifier and another class-prior estimated by the trained classifier. 

We prove the following theorem, which shows that, in population, the convergence of our class-prior estimator to the true class-prior. The theorem follows directly from the lemmas and the assumptions mentioned above. This proof is also shown in Appendix.
\begin{thm}
For the initial class-prior $\pi^{(0)}$ such that $\pi^{(0)} > \pi$, $\pi^{(k)}$ in Algorithm 1 converges to  $\pi_{\mathrm{max}}$ as $k\to\infty$.
\end{thm}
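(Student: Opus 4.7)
The plan is to combine the two lemmas in the standard monotone-convergence pattern: show the sequence $\{\pi^{(k)}\}$ is monotonically non-increasing, show it is bounded below by $\pi_{\max}$, then rule out a limit strictly greater than $\pi_{\max}$ by exploiting the Lipschitz hypothesis. Since Assumption 3 gives $\pi=\pi_{\max}$, the initial condition $\pi^{(0)}>\pi$ reads $\pi^{(0)}>\pi_{\max}$, which is the starting point for the induction.

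The first step is to establish, by induction on $k$, that $\pi^{(k)}\geq \pi_{\max}$ for every $k$. Given $\pi^{(k)}\geq \pi_{\max}$, the explicit form of $f^{(k+1)*}$ from Lemma~\ref{lemma1} allows a pointwise lower bound $f^{(k+1)*}(\bm{x})\geq \pi_{\max}\,p(\bm{x}|y=+1)/p(\bm{x})$: on $D^{(k)}$ this is immediate since $\pi^{(k)}\geq \pi_{\max}$, and on its complement we use $1-\epsilon\geq \pi_{\max}\,p(\bm{x}|y=+1)/p(\bm{x})$ directly from the definition of $\pi_{\max}$. Integrating against $p(\bm{x})$ yields $\pi^{(k+1)}\geq \pi_{\max}$. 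The monotonicity $\pi^{(k+1)}\leq \pi^{(k)}$ is then read off Lemma~\ref{lemma2}, whose right-hand side is manifestly non-negative. Hence $\{\pi^{(k)}\}$ is non-increasing and bounded below, so it converges to some limit $\pi^{\infty}\geq \pi_{\max}$.

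The main obstacle, and the only step that actually uses Assumption~\ref{asm2}, is excluding $\pi^{\infty}>\pi_{\max}$. Suppose for contradiction $\pi^{\infty}>\pi_{\max}$. By the definition of $\pi_{\max}$ as a supremum, there exists a point $\bm{x}_{0}$ with $\pi^{\infty}p(\bm{x}_0|y=+1) > (1-\epsilon)p(\bm{x}_0)$; the Lipschitz continuity of $p(\bm{x})$ and $p(\bm{x}|y=+1)$ propagates this strict inequality to an open ball around $\bm{x}_0$, which has positive Lebesgue measure. Since $\pi^{(k)}\geq \pi^{\infty}$ for all $k$, the integrand in Lemma~\ref{lemma2} dominates $(\pi^{\infty}p(\bm{x}|y=+1)-(1-\epsilon)p(\bm{x}))_{+}$, whose integral over that ball is a fixed positive constant $c>0$. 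Therefore $\pi^{(k)}-\pi^{(k+1)}\geq c$ for every $k$, contradicting convergence of $\{\pi^{(k)}\}$.

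Combining the two directions forces $\pi^{\infty}=\pi_{\max}$, which is the claim. I expect the Lipschitz step to be the delicate part of a fully rigorous write-up, since it is where one must argue that a pointwise strict inequality produces a set of positive measure uniformly in $k$; the monotonicity and the lower bound are essentially bookkeeping on the formulas delivered by Lemmas~\ref{lemma1} and~\ref{lemma2}.
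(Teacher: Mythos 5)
Your proposal is correct, and its skeleton matches the paper's up to the last step: both prove $\pi^{(k)}\geq\pi_{\max}$ by induction using the explicit minimizer from Lemma~\ref{lemma1}, both read the decrement off Lemma~\ref{lemma2}, and both invoke the Lipschitz assumption to turn a pointwise violation of the $\pi_{\max}$ inequality into a set of positive measure. Where you diverge is in how the convergence itself is extracted. The paper works quantitatively: it lower-bounds the decrement by $y_k-y_{k+1}\geq C\,y_k^{d+1}$ with $y_k=\pi^{(k)}-\pi_{\max}$ and an explicit constant $C=(p^*)^{d+1}/(2^{d+2}(L_1+L_2)^d)$, then converts this via an auxiliary algebraic lemma into $1/(y_{k+1})^d-1/(y_k)^d\geq C$, telescopes, and obtains the rate $y_T=O(T^{-1/d})$. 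You instead argue qualitatively: monotone bounded sequences converge, and a limit $\pi^{\infty}>\pi_{\max}$ would force a uniform lower bound $c>0$ on every decrement, contradicting convergence. Your route is more elementary (no auxiliary inequality, no recursion-solving) and has one genuine technical advantage: it only needs the existence of a point $\bm{x}_0$ where $\pi^{\infty}p(\bm{x}_0|y=+1)>(1-\epsilon)p(\bm{x}_0)$, which follows immediately from $\pi^{\infty}$ exceeding the supremum defining $\pi_{\max}$, whereas the paper asserts the existence of $\bm{x}^*$ attaining equality $\pi_{\max}p(\bm{x}^*|y=+1)=(1-\epsilon)p(\bm{x}^*)$, an attainment claim that is not justified in the write-up. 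What the paper's heavier machinery buys is an explicit, non-asymptotic convergence rate, which your argument does not provide. One small point to make explicit in a full write-up: Lemma~\ref{lemma2} is stated under the strict hypothesis $\pi^{(k)}>\pi_{\max}$, so you should note that if $\pi^{(k)}=\pi_{\max}$ ever holds the sequence is stationary thereafter and the conclusion is immediate, while in the contradiction branch $\pi^{(k)}\geq\pi^{\infty}>\pi_{\max}$ keeps the hypothesis satisfied at every step.
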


\paragraph{Necessity of $\epsilon$:} Readers might feel that we do not have to consider $\epsilon>0$; rather we might simply consider $f:R^{d}\rightarrow[0,1]$. However, it causes a problem because, in that case, there is no stationary point in our optimization problem due to the existence of $\log(1-f(\bm{x}))$ in (\ref{basic}).

\subsection{Algorithm}
Here, we define an empirical version of our algorithm. Let us denote the the solution of the empirical risk minimization problem at the $k$th step as $\hat{f}^{(k+1)}$ and the estimated class-prior after the $k$th iteration as
$\hat{\pi}^{(k+1)}$. In the initial step, we set an initial prior $\hat{\pi}^{(0)}$ such that
$\hat{\pi}^{(0)} > \pi$. Then, we iterate the following alternate estimation procedure. 
\begin{itemize}
\item At the $k$th step, given estimated class-prior $\hat{\pi}^{(k)}$, train a classifier $h$ by finding $f$ that minimizes the empirical risk $\hat{R}^{\hat{\pi}^{(k)}}(f)$, where $\hat{R}^{\hat{\pi}^{(k)}}(f)$ is an empirical estimator of (\ref{basic}) with $\pi$ replaced by $\hat{\pi}^{(k)}$.
\item Treat $\hat{f}^{(k+1)}$ as an approximation of $p(y=+1|\bm{x})$ and obtain $\hat{\pi}^{(k+1)}$ by the average of $\hat{f}^{(k+1)}$ over samples from unlabeled data.
\end{itemize}

However, unlike the property in population, choice of the initial class-prior has influence on the convergence. We observed that, if the initial class-prior $\hat{\pi}^{(0)}$ is much larger than the true class-prior $\pi$, $\hat{f}^{(k)}$ tends to approach to $1$ as $k\rightarrow +\infty$. For example, we observed this phenomenon when the initial class-prior was $\hat{\pi}^{(0)} = 0.9$, but the true class-prior was $\pi=0.2$. This phenomenon is considered to be related with the estimation error and violation of the assumptions. In order to avoid this phenomenon, we introduce $\delta$ and $\xi$ as heuristics. $\delta$ should be a large value which is less than $1$, but close to $1$. $\xi$ should be a small positive value. If $\hat{f}^{(k)}(\bm{x}) = 1-\epsilon$ for all $\bm{x}$, the initial class-prior is wrong. Hence, we reset $\hat{\pi}^{(0)}$ by $\hat{\pi}^{(0)}-\xi$ when $\hat{\pi}^{(k)} > \delta$. Then, we restart our algorithm under a new estimator of the class-prior $\hat{\pi}^{(0)}$. A pseudo-code of our algorithm is shown in Algorithm 1. 
\begin{algorithm}[tb]
   \caption{Alternate Estimation}
   \label{alg}
\begin{algorithmic}
   \STATE {\bfseries Input:} Set an initial class-prior $\hat{\pi}^{(0)} > \pi$.
   \STATE Set a value $\delta < 1$ and a small positive value $\xi$. 
   \STATE $k = 0$.
   \REPEAT
   \STATE Estimate $\hat{f}^{(k+1)} = \mathrm{arg}\min_{f\in\mathcal{F}}\hat{R}^{\hat{\pi}^{(k)}}(f)$.
   \STATE $\hat{\pi}^{(k+1)} = \frac{1}{n'}\sum^{n'}_{i=1}\hat{f}^{(k+1)}(\bm{x}_i)$.
   \STATE If $\hat{\pi}^{(k+1)} > \delta$, $\hat{\pi}^{(0)} = \hat{\pi}^{(0)} - \xi$ and $\hat{\pi}^{(k+1)} = \hat{\pi}^{(0)}$.
   \STATE $k = k + 1$
   \UNTIL $\hat{\pi}^{(k)}$ converges.
\end{algorithmic}
\end{algorithm} 

Our algorithm is similar to expectation-maximization (EM) algorithm \citep{Dempster77maximumlikelihood}. However, there is a crucial difference between our algorithm and EM algorithm. EM algorithm aims to maximize the likelihood, but our algorithm is based on the property of the stationary point to estimate the class-prior and do not consider the maximization of the likelihood through iterations. For this reason, although our algorithm is similar to EM algorithm, the mechanism of the estimation is quite different.

\section{Experiments}
In this section, we report experimental results which were conducted using numerical and real datasets\footnote{All codes of experiments can be downloaded from \url{https://github.com/MasaKat0/AlterEstPU}.}. In Comparison Tests and Benchmark Tests, we used $6$ classification datasets from UCI repository\footnote{We use the {\tt digits} (or called {\tt mnist}), {\tt mushrooms}, {\tt spambase}, {\tt waveform}, {\tt usps}, {\tt connect-4}. Some data are multi-labeled data, so we divide them into two groups. The UCI data were downloaded from \url{https://archive.ics.uci.edu/ml/index.php} and \url{https://www.csie.ntu.edu.tw/~cjlin/libsvmtools/}.}. The details of datasets are given in Table 1. In all experiments, we set $\delta=0.9$ and $\xi=0.01$. In Comparison Tests and Benchmark Tests, we iterated $150$ times in alternate estimation to make sure that it converges. In practice, it is not necessary to iterate as many steps as we did in the experiments. 

We used the sigmoid function for representing the probabilistic model $f$:
\begin{align*}
f(\bm{x}) = \frac{1}{1+\exp(-g(\bm{x}))},
\end{align*}
where $g(\bm{x})$ is a real-valued function. 

For $g(\bm{x})$, we used two linear models. We denote the parameter of models as $\bm{\theta}$ and the parameter space as $\Theta$. The first model is
\begin{align}
\label{linearmodel1}
g(\bm{x}) = \bm{\theta}^\top\bm{z},
\end{align}
where $\bm{z} = (1,\bm{x}^{\top})^{\top}$ and $^\top$ denotes the transpose. This model has $(d+1)$-dimensional parameters $\bm{\theta}$ when $\bm{x}$ is a $d$-dimensional vector. The second model is
\begin{align}
\label{linearmodel2}
g(\bm{x}) = \bm{\theta}^\top\bm{x}.
\end{align}
This model does not use the bias term. The second model might be unnatural, but empirically it performed well as we show below. We refer to the first model as AltEst1 and the second model as AltEst2 respectively. In the Numerical Tests and the Comparison Tests section, we used only AltEst1. In the Benchmark Tests section, we used both models. We assume the parameter space $\Theta$ is a compact set. Because we also assume the input space $\mathcal{X}$ is a compact set, the models (\ref{linearmodel1}) and (\ref{linearmodel2}) satisfy Assumption 1.

\begin{table}
\label{Dataset}
\caption{Specification of datasets}
\begin{center}
\scalebox{1.2}[1.2]{\begin{tabular}{cccc}
\hline
Dataset&\# of samples&Class-prior&Dimension\\
\hline
{\tt waveform} & 5000 & 0.492 &  21 \\
{\tt mushroom} & 8124 & 0.517 &  112 \\
{\tt spambase} & 4601 & 0.394 &  57 \\
{\tt digits (mnist)}& 70000  & 0.511 &  784 \\
{\tt usps}      & 9298 & 0.524 &  256 \\
{\tt connect-4}& 67557& 0.658 &  126 \\
\end{tabular}}
\end{center}
\end{table}
\subsection{Numerical Tests}
In this subsection we numerically illustrate the convergence to the true class-prior of AltEst1. We used samples from a mixture distribution of the following two class-conditional distributions:
\begin{align*}
&p(x|y=+1)=\mathcal{N}_{x}(2,1^2)\ \nonumber\\
&\mathrm{and}\ p(x|y=-1)=\mathcal{N}_{x}(-2,1^2),
\end{align*}
where $\mathcal{N}(\mu, \sigma^2)$ denotes the univariate normal distribution with mean $\mu$ and variance $\sigma^2$.

\paragraph{Behavior of Classifiers: }
This experiment shows how to approach the true classifier through alternate estimation. We generated $100$ positive samples and $10000$ unlabeled samples. We made three datasets with different class-priors $\pi = 0.2$, $\pi = 0.5$, and $\pi = 0.8$. We plotted the classifiers estimated from the initial class-prior $0.9$ in each round (in total $10$ rounds) in Fig. $1$. 

\paragraph{Behavior of Updated Class-Priors: }
This experiment shows how to move the estimated class-prior after the optimization under an inaccurate class-prior. We generated $100$ positive samples and generated $10000$ unlabeled samples with the different class-prior $\pi = 0.2$, $\pi = 0.5$, and $\pi = 0.8$. In Fig $2$, the horizontal axis represents the inserted class-prior and the vertical axis represents the updated class-prior after one iteration using the inserted class-prior. The blue line represents $y=x$ and visualizes the fixed points. If the estimated class-prior is on the line in a round, the estimated class-prior will not change after one iteration.

In our two Gaussian datasets, we believe that the non-negative class-prior matched the true class-prior and the theoretical result explained the behavior of our algorithm.

\begin{figure*}[htb]
\begin{center}
\includegraphics[width = 14.cm]{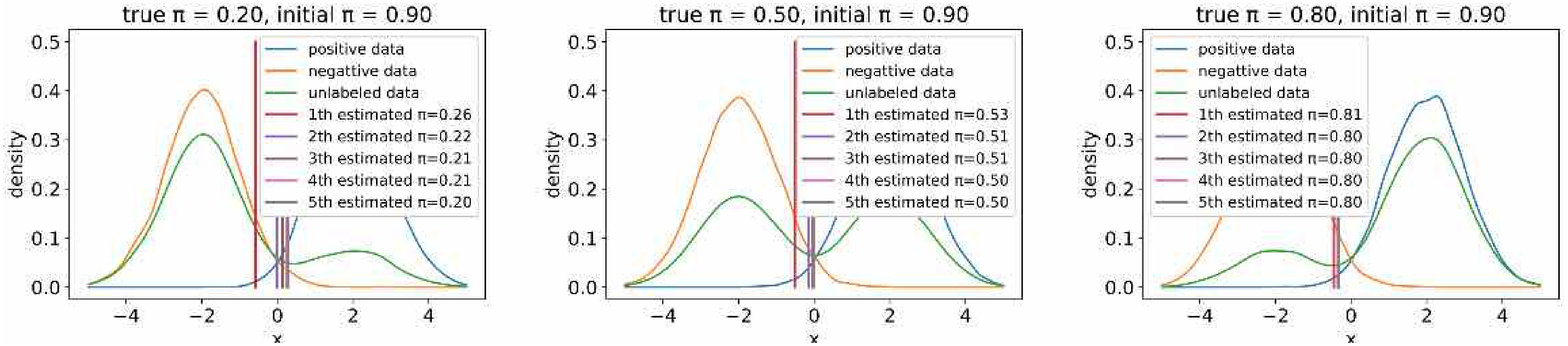}

\caption{Behavior of classifiers. The horizontal axis is value of $x$ and the vertical axis is probabilistic density. The vertical line represents the classifier.}
\end{center}
\bigskip
\begin{center}
\includegraphics[width = 14.cm]{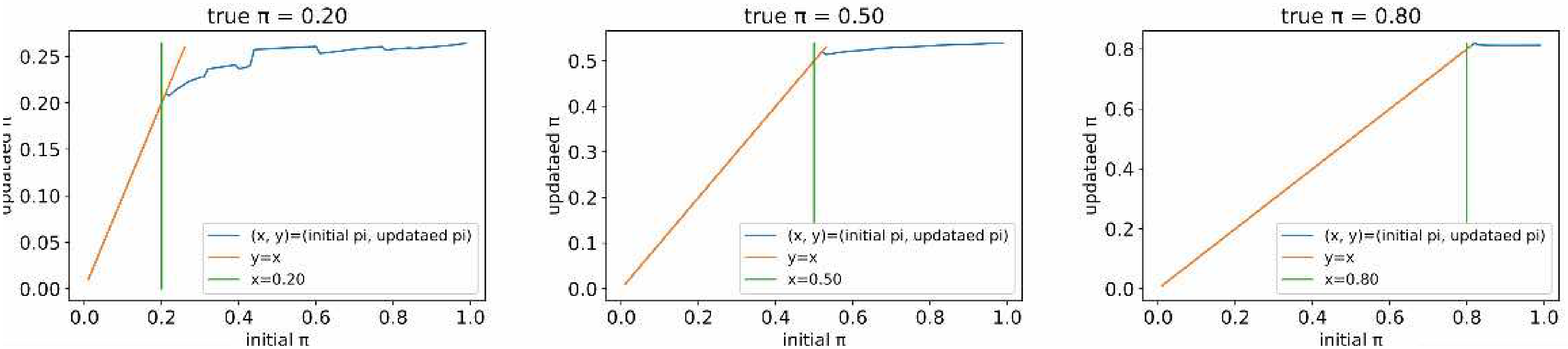}
\caption{Behavior of updated class-priors. The horizontal axis is the initial class-prior and the vertical axis is the returned class-prior after one iteration.}
\end{center}
\end{figure*}

\subsection{Comparison Tests}
To demonstrate the usefulnes of the proposed algorithm compared with two methods proposed by \citet{pmlr-v48-ramaswamy16}, "KM1" and "KM2", which are based on kernel mean embedding. We used the {\tt digits}, {\tt usps}, {\tt connect-4}, {\tt mushroom}, {\tt waveform}, and {\tt spambase} datasets from UCI repository. For the {\tt digits}, {\tt usps}, {\tt connect}, and {\tt mushroom} datasets, we projected data points onto the $50$-dimensional space by principal component analysis (PCA). For each binary labeled dataset, we made $8$ different pairs of positive and unlabeled data. Firstly, given the binary labeled dataset, we made $4$ pairs of positive and unlabeled data with the different class-priors for the unlabeled dataset. The class-prior was chosen from $\{0.2, 0.4, 0.6, 0.8\}$. Next, we flipped the labels, i.e., we used negative data as positive data and positive data as negative data, and made another $4$ datasets with the different class-priors. As a result, we obtained $8$ pairs of positive and unlabeled data. We evaluated the performance by the absolute error defined as $|\hat{\pi} - \pi|$, where $\hat{\pi}$ is an estimated class-prior. For each pair of $8$ pairs made from $6$ datasets, we estimated the class-prior $5$ times and calculated the average absolute error. The number of positive data was fixed at $400$. The numbers of unlabeled data were $400$, $800$, $1600$, and $3200$. This setting is almost all same as \citet{pmlr-v48-ramaswamy16}. The result is shown in Fig $3$. The horizontal axis is corresponding to the number of unlabeled data and the vertical axis is corresponding to the average absolute error $|\hat{\pi} - \pi|$. We set the initial class-prior of our algorithm as $0.9$. 

Our algorithm has preferable performance compared with the existing method in several cases. The reason why our algorithm could not work well in some datasets such as the  {\tt waveform} might be due to the violation of assumptions.

\begin{figure}[htb]
\begin{center}
\includegraphics[width = 14.0cm]{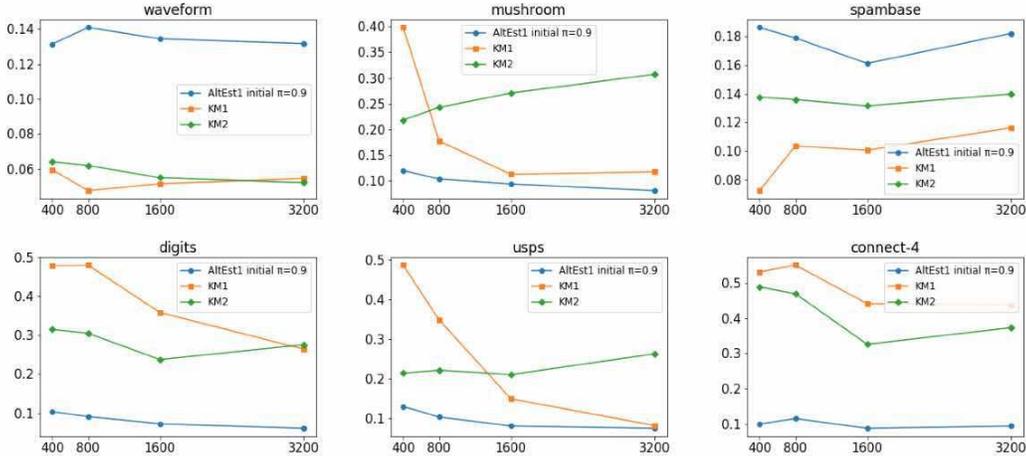}
\caption{Comparison: The horizontal axis is the number of unlabeled data and the vertical axis is the averaged absolute error $|\hat{\pi} - \pi|$. We set the initial class-prior of our algorithm as $0.9$.}
\end{center}
\end{figure}

\subsection{Benchmark Tests}
In this subsection, we investigate the experimental performance in more detail. Unlike the previous experiment, we report estimators of the class-prior instead of the average absolute error rate. We used the {\tt digits}, {\tt mushrooms}, {\tt usps}, {\tt waveform} and {\tt spambase} datasets. For {\tt digits}, {\tt mushrooms} and {\tt usps} datasets, we reduced the dimension by PCA. For the {\tt digits} dataset, we projected the data points onto the $100$-dimensional space and the $200$-dimensional space. For the {\tt mushrooms} and {\tt usps} datasets, we projected the data points onto the $100$-dimensional data space. For each dataset, we drew $400$ positive data and $1600$ unlabeled data. After learning a classifier and estimating the class-prior, we also checked the accuracy of the classifier using test data. For the {\tt digits}, {\tt mushrooms} and {\tt usps} datasets, we used $1000$ test data. For the {\tt waveform} and {\tt spambase} datasets, we used $300$ test data since the size of the original datasets were limited. The result is shown in Tables $2 - 4$ with the estimated class-prior and the error rate of classifiers. To evaluate the accuracy of a classifier, we compared classifiers trained by AltEst1 and AltEst2 with three classifiers trained by convex PU learning with logistic loss and the model (\ref{linearmodel1}) given the true class-prior and two estimated class-priors by KM1 and KM2. To evaluate the performance of class-prior estimation, we compared our algorithm with the methods proposed by \citet{pmlr-v48-ramaswamy16}. We set the initial class-prior of our algorithm as $0.9$. We ran the experiments $100$ times and calculated the mean and standard deviation. We evaluated the performance of classifiers and, by reducing by PCA, we also checked the robustness of the algorithms to the dimension. 

As shown in Tables $2 - 4$, AltEst2 returned a preferable estimator of the class-prior. We can observe that our algorithms work stably in various settings, but the existing methods were easily influenced by the true class prior and the dimensionality of data. For the {\tt waveform} and {\tt spambase} datasets, our algorithms could not show good results. For these datasets, the accuracies of the classifier were low even given the true class-prior. We believe that the accuracy of the classifier affects the estimation of the class-prior, i.e., it is difficult to estimate the class-prior with data which is difficult to be classified. In practice, we need to specify the model more carefully to gain accuracy of a classifier. 

\section{Discussion and Conclusion}
In this paper, we proposed a novel unified approach to estimating the class-prior and training a classifier alternately. Our method has benefits from the one-step estimation compared with other conventional two-step approaches which estimate the class-prior firstly and then train a classifier. We showed the theoretical guarantees in population and proposed an algorithm. In experiments, our method showed preferable performances. Moreover, we confirmed that, if we set the initial class-prior as the value that is close to the true class-prior, the behavior of our algorithm are improved in practice. An important future direction is to extend our method to adopt deep neural networks to gain more accuracy.

\section*{Acknowledgments}
MS was supported by JST CREST JPMJCR1403.

\bibliography{arxiv1.bbl}

\begin{thebibliography}{14}
\providecommand{\natexlab}[1]{#1}
\providecommand{\url}[1]{\texttt{#1}}
\expandafter\ifx\csname urlstyle\endcsname\relax
  \providecommand{\doi}[1]{doi: #1}\else
  \providecommand{\doi}{doi: \begingroup \urlstyle{rm}\Url}\fi

\bibitem[Blanchard et~al.(2010)Blanchard, Lee, and Scott]{blanchard2010semi}
Gilles Blanchard, Gyemin Lee, and Clayton Scott.
\newblock Semi-supervised novelty detection.
\newblock \emph{Journal of Machine Learning Research}, 11\penalty0
  (Nov):\penalty0 2973--3009, 2010.

\bibitem[Dempster et~al.(1977)Dempster, Laird, and
  Rubin]{Dempster77maximumlikelihood}
A.~P. Dempster, N.~M. Laird, and D.~B. Rubin.
\newblock Maximum likelihood from incomplete data via the em algorithm.
\newblock \emph{Journal of the Royal Statistical Society, Series B},
  39\penalty0 (1):\penalty0 1--38, 1977.

\bibitem[du~Plessis and Sugiyama(2014)]{IEICE:duPlessis+Sugiyama:2014}
M.~C. du~Plessis and M.~Sugiyama.
\newblock Class prior estimation from positive and unlabeled data.
\newblock \emph{{IEICE} Transactions on Information and Systems},
  E97-D\penalty0 (5):\penalty0 1358--1362, 2014.

\bibitem[du~Plessis et~al.(2015)du~Plessis, Niu, and
  Sugiyama]{ICML:duPlessis+etal:2015}
M.~C. du~Plessis, G.~Niu, and M.~Sugiyama.
\newblock Convex formulation for learning from positive and unlabeled data.
\newblock In \emph{ICML}, pages 1386--1394, 2015.

\bibitem[du~Plessis et~al.(2016)du~Plessis, Niu, and
  Sugiyama]{christoffel2016class}
M.~C. du~Plessis, G.~Niu, and M.~Sugiyama.
\newblock Class-prior estimation for learning from positive and unlabeled data.
\newblock In \emph{ACML}, pages 221--236, 2016.

\bibitem[Elkan and Noto(2008)]{elkan2008learning}
Charles Elkan and Keith Noto.
\newblock Learning classifiers from only positive and unlabeled data.
\newblock In \emph{ICDM}, pages 213--220, 2008.

\bibitem[Gelfand et~al.(2000)Gelfand, Silverman, et~al.]{gelfand2000calculus}
Izrail~Moiseevitch Gelfand, Richard~A Silverman, et~al.
\newblock \emph{Calculus of variations}.
\newblock Courier Corporation, 2000.

\bibitem[Jain et~al.(2016)Jain, White, Trosset, and
  Radivojac]{jain2016nonparametric}
Shantanu Jain, Martha White, Michael~W Trosset, and Predrag Radivojac.
\newblock Nonparametric semi-supervised learning of class proportions.
\newblock In \emph{NIPS}, 2016.

\bibitem[Li et~al.(2009)Li, Yu, Liu, and Ng]{li2009positive}
Xiao-Li Li, Philip~S Yu, Bing Liu, and See-Kiong Ng.
\newblock Positive unlabeled learning for data stream classification.
\newblock In \emph{ICDM}, pages 259--270, 2009.

\bibitem[Nguyen et~al.(2011)Nguyen, Li, and Ng]{nguyen2011positive}
Minh~Nhut Nguyen, Xiaoli-Li Li, and See-Kiong Ng.
\newblock Positive unlabeled leaning for time series classification.
\newblock In \emph{IJCAI}, pages 1421--1426, 2011.

\bibitem[Ramaswamy et~al.(2016)Ramaswamy, Scott, and
  Tewari]{pmlr-v48-ramaswamy16}
Harish Ramaswamy, Clayton Scott, and Ambuj Tewari.
\newblock Mixture proportion estimation via kernel embeddings of distributions.
\newblock In \emph{ICML}, pages 2052--2060, 2016.

\bibitem[Rockafellar(2015)]{rockafellar2015convex}
Ralph~Tyrell Rockafellar.
\newblock \emph{Convex analysis}.
\newblock Princeton university press, 2015.

\bibitem[Scott and Blanchard(2009)]{pmlr-v5-scott09a}
Clayton Scott and Gilles Blanchard.
\newblock Novelty detection: Unlabeled data definitely help.
\newblock In \emph{AISTATS}, pages 464--471, 2009.

\bibitem[Ward et~al.(2009)Ward, Hastie, Barry, Elith, and
  Leathwick]{ward2009presence}
Gill Ward, Trevor Hastie, Simon Barry, Jane Elith, and John~R Leathwick.
\newblock Presence-only data and the em algorithm.
\newblock \emph{Biometrics}, 65\penalty0 (2):\penalty0 554--563, 2009.

\end{thebibliography}
\bibliographystyle{plainnat}

\begin{figure}[htb]
\begin{center}
\scalebox{0.7}[0.7]{
  \begin{tabular}{lc||cccc||cccc}
    \hline
          &  & {\tt 0 vs. 1} & & & &{\tt 0 vs. 2} & & &  \\
    \hline
    True  & Prior & 20 (-) & 40 (-) & 60 (-) & 80 (-) & 20 (-) & 40 (-) & 60 (-) & 80 (-) \\
          & Err & 0.2 (.006) & 0.6 (.003) & 1.3 (.006) & 2.4 (.014) & {\bf 2.4} (.013) & 5.5 (.016) & 8.9 (.019) & 12.2 (.025) \\
    AltEst1   & Prior & 19.3 (.017) & 38.4 (.029) & {\bf 60.2} (.077) & 64.9 (.099) & 14.9 (.035) & 36.1 (.044) & 42.2 (.028) & 56.7 (.040) \\
          & Err & 0.7 (.018) & 1.4 (.028) & 6.5 (.052) & 15.2 (.088) & 6.3 (.032) & 6.3 (.046) & 21.6 (.031) & 26.0 (.042) \\
    AltEst2 & Prior & {\bf 19.8} (.002) & {\bf 39.8} (.002) & {\bf 59.9} (.002) & {\bf 80.0} (.003) & 15.9 (.017) & 28.4 (.048) &   {\bf 57.5} (.034) & {\bf 81.2} (.016) \\
          & Err & {\bf 0.1} (.000) & {\bf 0.1} (.001) & {\bf 0.2} (.001) & {\bf 0.6} (.002) & 4.8 (.017) & 12.7 (.047) & {\bf 5.5} (.035) & {\bf 7.1} (.012) \\
    KM1   & Prior & 9.6 (.012) & 38.9 (.029) & {\bf 59.6} (.015) & 81.1 (.022) & 23.5 (.009) & 45.4 (.013) & 67.4 (.019) & 29.3 (.022) \\
          & Err & 8.4 (.017) & 1.3 (.030) & 1.1 (.009) & 5.6 (.038) & 3.0 (.008) & 6.4 (.013) & 19.6 (.031) & 58.3 (.049) \\
    KM2   & Prior & 16.1 (.012) & 37.3 (.021) & 53.9 (.020) & 63.4 (.034) & {\bf 22.3} (.007) & {\bf 39.8} (.012) & 52.9 (.023) & 3.0 (.055) \\
          & Err & 4.0 (.013) & 2.9 (.032) & 7.0 (.039) & 17.1 (.040) & 2.7 (.008) & {\bf 5.0} (.013) & 9.0 (.038) & 79.4 (.022) \\
    \hline
          &  & {\tt 0 vs. 3} & & & &{\tt 0 vs. 4} & & &  \\
    \hline
    True  & Prior & 20 (-) & 40 (-) & 60 (-) & 80 (-) & 20 (-) & 40 (-) & 60 (-) & 80 (-) \\
          & Err & {\bf 1.6} (.008) & 3.3 (.010) & 7.2 (.017) & 10.4 (.019) & {\bf 0.7} (.003) & 2.0 (.007) & 5.3 (.013) & 8.9 (.021) \\
    AltEst1  & Prior & 17.1 (.036) & 36.6 (.033) & 40.8 (.029) & 54.6 (.043) & {\bf 19.7} (.012) & 38.4 (.007) & 42.0 (.036) & 53.6 (.038) \\
          & Err & 3.6 (.029) & 3.7 (.033) & 20.6 (.033) & 26.8 (.050) & 0.8 (.011) & 1.6 (.006) & 18.7 (.035) & 25.4 (.039) \\
    AltEst2 & Prior & 16.0 (.023) & 34.2 (.048) & {\bf 59.4} (.008) & {\bf 82.9} (.011) & 17.7 (.009) & {\bf 39.0} (.006) &   {\bf 60.2} (.005) & {\bf 81.3} (.005) \\
          & Err & 4.0 (.019) & 5.6 (.048) & {\bf 3.0} (.007) & {\bf 6.0} (.010) & 1.6 (.006) & {\bf 0.9} (.004) & {\bf 1.4} (.004) & {\bf 3.0} (.0006) \\
    KM1   & Prior & 23.4 (.009) & 45.0 (.011) & 67.1 (.017) & 29.3 (.022) & 21.8 (.008) & 42.8 (.011) &  64.1 (.015) & 36.2 (.081) \\
          & Err & 1.9 (.006) & 3.6 (.011) & 17.2 (.021) & 58.0 (.055) & 0.8 (.003) & 1.8 (.007) & 10.4 (.018) & 44.6 (.085) \\
    KM2   & Prior & {\bf 22.2} (.009) & {\bf 40.1} (.011) & 53.5 (.020) & 7.3 (.088) & 21.0 (.008) & 38.8 (.012) & 53.0 (.018) & 14.7 (.109) \\
          & Err & 1.8 (.005) & {\bf 3.0} (.009) & 7.0 (.039) & 77.8 (.035) & 0.8 (.003) & 1.6 (.006) & 9.4 (.036) & 73.5 (.084) \\
    \hline
          &  & {\tt 0 vs. 5} & & & &{\tt 0 vs. 6} & & &  \\
    \hline
    True  & Prior & 20 (-) & 40 (-) & 60 (-) & 80 (-) & 20 (-) & 40 (-) & 60 (-) & 80 (-) \\
          & Err & 4.8 (.015) & {\bf 5.7} (.014) & 10.8 (.019) & 13.3 (.025) & 3.8 (.010) & 6.4 (.015) & 7.8 (.015) & 10.2 (.019) \\
    AltEst1   & Prior & {\bf 18.0} (.036) & 34.3 (.054) & 41.8 (.033) & 56.4 (.040) & 15.7 (.034) & 35.3 (.047) & 42.8 (.027) & {\bf 57.1} (.053) \\
          & Err & 4.8 (.022) & 9.1 (.045) & 20.2 (.030) & 25.2 (.035) & 6.5 (.023) & 8.7 (.040) & 20.3 (.030) & 25.0 (.049) \\
    AltEst2 & Prior & 10.9 (.027) & 26.1 (.073) & 47.4 (.119) & {\bf 80.7} (.022) & 16.9 (.022) & 35.9 (.029) &   {\bf 61.0} (.008) & {\bf 84.1} (.012) \\
          & Err & 9.0 (.020) & 14.9 (.065) & 14.7 (.110) & {\bf 11.2} (.017) & 4.6 (.018) & 6.6 (.027) & {\bf 6.1} (.010) & {\bf 7.5} (.008) \\
    KM1   & Prior & 25.1 (.010) & 47.9 (.016) & 69.6 (.018) & 33.0 (.065) & 23.4 (.010) & 44.7 (.014) &  66.0 (.015) & 50.2 (.064) \\
          & Err & 3.6 (.006) & 8.1 (.057) & 24.6 (.044) & 49.7 (.078) & 3.9 (.006) & 6.7 (.013) & 14.5 (.020) & 30.3 (.041) \\
    KM2   & Prior & 22.8 (.010) & {\bf 40.2} (.014) & {\bf 52.7} (.019) & 3.9 (.065) & {\bf 22.2} (.008) & {\bf 39.8} (.013) & 53.6 (.015) & 22.6 (.054) \\
          & Err & {\bf 3.2} (.005) & {\bf 5.7} (.012) & {\bf 10.0} (.031) & 79.0 (.028) & {\bf 3.7} (.005) & {\bf 6.2} (.013) & 8.8 (.028) & 69.5 (.071) \\
    \hline
          &  & {\tt 0 vs. 7} & & & & {\tt 0 vs. 8} & & &  \\
    \hline
    True  & Prior & 20 (-) & 40 (-) & 60 (-) & 80 (-) & 20 (-) & 40 (-) & 60 (-) & 80 (-) \\
          & Err & {\bf 1.9} (.008) & 4.0 (.014) & 5.2 (.013) & 7.8 (.020) & 3.3 (.012) & 5.2 (.010) & 10.7 (.019) & 13.5 (.017) \\
    AltEst1   & Prior & 16.4 (.036) & 36.1 (.040) & 43.3 (.073) & 57.2 (.055) & 17.6 (.039) & 34.9 (.052) & 41.0 (.028) & 53.3 (.040) \\
          & Err & 4.3 (.031) & 5.4 (.041) & 19.0 (.044) & 23.7 (.056) & 4.4 (.023) & 7.3 (.041) & 20.2 (.030) & 27.3 (.038) \\
    AltEst2 & Prior & 16.8 (.024) & 36.7 (.040) & {\bf 59.3} (.006) & {\bf 81.5} (.008) & 16.0 (.020) & 30.2 (.043) &   {\bf 57.1} (.019) & {\bf 80.9} (.012) \\
          & Err & 3.3 (.022) & {\bf 3.9} (.039) & {\bf 2.7} (.006) & {\bf 4.7} (.010) & 4.6 (.015) & 10.3 (.040) & {\bf 6.7} (.016) & {\bf 9.6} (.012) \\
    KM1   & Prior & 21.0 (.010) & {\bf 41.8} (.014) & 63.0 (.016) & 52.4 (.057) & 23.3 (.008) & 45.2 (.014) & 66.9 (.020) & 30.4 (.027) \\
          & Err & {\bf 1.9} (.005) & {\bf 4.2} (.012) & 9.1 (.020) & 27.4 (.039) & 3.2 (.004) & 5.5 (.011) & 21.0 (.029) & 54.4 (.051) \\
    KM2   & Prior & {\bf 20.2} (.008) & 37.9 (.014) & 52.6 (.018) & 23.2 (.035) & {\bf 22.1} (.008) & {\bf 39.8} (.013) & 53.5 (.019) & 7.5 (.091) \\
          & Err & {\bf 1.9} (.007) & {\bf 4.0} (.024) & 8.0 (.033) & 68.4 (.049) & {\bf 3.0} (.004) & {\bf 5.0} (.009) & 9.0 (.032) & 77.0 (.048) \\
  \end{tabular}
  }
  \captionsetup{labelformat=empty}
  \caption{Table 2: {\tt digits} 0 vs. 1, 2, 3, 4, 5, 6, 7, 8 with dimension $100$ via PCA: The estimated class-prior (Prior: $\%$) and the error rate of classification in test data (Error: $\%$) are shown. Best and equivalent methods (under 5$\%$ t-test) are bold.}
\end{center}
\end{figure}

\begin{figure}[htb]
\begin{center}
\scalebox{0.7}[0.7]{
  \begin{tabular}{lc||cccc||cccc}
    \hline
          &  & {\tt 0 vs. 1} & & & & {\tt 0 vs. 2} & & &  \\
    \hline
    True  & Prior & 20 (-) & 40 (-) & 60 (-) & 80 (-) & 20 (-) & 40 (-) & 60 (-) & 80 (-) \\
          & Err & {\bf 0.2} (.004) & 1.0 (.004) & 3.7 (.011) & 8.0 (.020) & {\bf 3.2} (.008) & 8.6 (.016) & 15.0 (.021) & 18.2 (.012) \\
    AltEst1 & Prior & {\bf 19.7} (.007) & 38.6 (.006) &  41.6 (.069) & 53.4 (.065) & {\bf 15.1} (.045) & 34.0 (.037) & 37.0 (.058) & 43.9 (.057) \\
          & Err & {\bf 0.2} (.005) & 1.0 (.005) & 17.7 (.057) & 25.2 (.067) & 5.7 (.034) & {\bf 6.9} (.033) & 24.3 (.060) & 36.5 (.049) \\
    AltEst2 & Prior & {\bf 19.7} (.002) & {\bf 39.6} (.002) & {\bf 59.7} (.003) & {\bf 79.9} (.005) & 12.3 (.017) & 34.5 (.047) &   {\bf 55.4} (.020) & {\bf 79.4} (.017) \\
          & Err & {\bf 0.2} (.001) & {\bf 0.2} (.001) & {\bf 0.4} (.002) & {\bf 1.1} (.006) & 7.1 (.016) & 15.6 (.047) & {\bf 7.7} (.016) & {\bf 9.6} (.015) \\
    KM1   & Prior & 1.3 (.000) & 35.8 (.086) & 60.8 (.013) & 66.5 (.148) & 25.6 (.012) & 48.5 (.015) & {\bf 54.4} (.282) & 21.9 (.063) \\
          & Err & 80.0 (.000) & 4.7 (.072) & 4.9 (.018) & 21.1 (.074) & 4.9 (.010) & 11.6 (.073) & 36.8 (.126) & 60.4 (.066) \\
    KM2   & Prior & 18.0 (.010) & 35.7 (.026) & 51.8 (.017) & 37.6 (.160) & 21.6 (.011) & {\bf 37.1} (.013) & 34.5 (.177) & 1.3 (.000) \\
          & Err & 1.3 (.016) & 2.6 (.034) & 7.9 (.028) & 40.8 (.160) & 3.7 (.009) & 7.0 (.015) & 29.7 (.164) & 80.0 (.000) \\
    \hline
          &  & {\tt 0 vs. 3} & & & &{\tt 0 vs. 4} & & &  \\
    \hline
    True  & Prior & 20 (-) & 40 (-) & 60 (-) & 80 (-) & 20 (-) & 40 (-) & 60 (-) & 80 (-) \\
          & Err & {\bf 1.9} (.005) & 5.9 (.015) & 13.0 (.017) & 16.4 (.013) & {\bf 1.0} (.003) & 3.7 (.009) & 9.7 (.014) & 15.5 (.014) \\
    AltEst1   & Prior & {\bf 17.7} (.024) & 35.5 (.010) & 39.5 (.065) & 43.2 (.049) & 19.3 (.007) & 36.9 (.009) & 38.9 (.065) & 45.8 (.048) \\
          & Err & 2.6 (.018) & {\bf 4.3} (.011) & 20.4 (.070) & 36.2 (.046) & 1.1 (.004) & 2.9 (.008) & 19.6 (.063) & 33.6 (.050) \\
    AltEst2 & Prior & 13.6 (.018) & 31.8 (.044) & {\bf 57.9} (.014) & {\bf 81.7} (.012) & 15.7 (.024) & {\bf 38.2} (.007) &   {\bf 59.6} (.007) & {\bf 80.8} (.008) \\
          & Err & 5.3 (.015) & 7.1 (.040) & {\bf 4.9} (.011) & {\bf 7.3} (.011) & 3.2 (.018) & {\bf 1.5} (.005) & {\bf 2.4} (.006) & {\bf 4.3} (.008) \\
    KM1   & Prior & 25.5 (.010) & 48.5 (.014) & 71.9 (.019) & 3.0 (.057) & 23.4 (.008) & 45.7 (.010) &  68.2 (.015) & 4.2 (.075) \\
          & Err & 3.1 (.008) & 7.7 (.074) & 33.0 (.033) & 78.4 (.056) & 1.2 (.005) & 3.1 (.008) & 23.3 (.027) & 77.0 (.078) \\
    KM2   & Prior & 21.2 (.009) & {\bf 36.5} (.016) & 44.4(.018) & 1.3 (.000) & {\bf 20.2} (.009) & 35.8 (.011) & 44.9 (.022) & 1.3 (.000) \\
          & Err & 2.1 (.005) & 4.7 (.012) & 18.1 (.049) & 80.0 (.000) & 1.2 (.004) & 2.8 (.008) & 13.2 (.055) & 80.0 (.000) \\
    \hline
          &  & {\tt 0 vs. 5} & & & &{\tt 0 vs. 6} & & &  \\
    \hline
    True  & Prior & 20 (-) & 40 (-) & 60 (-) & 80 (-) & 20 (-) & 40 (-) & 60 (-) & 80 (-) \\
          & Err & 4.6 (.005) & 8.9 (.015) & 16.1 (.020) & 18.1 (.011) & {\bf 4.2} (.006) & 8.9 (.014) & 11.9 (.015) & 16.0 (.015) \\
    AltEst1   & Prior & 17.4 (.037) & 34.1 (.047) & 36.3 (.047) & 44.2 (.058) & 14.0 (.052) & 35.1 (.035) & 36.6 (.038) & 49.0 (.052) \\
          & Err & 4.7 (.023) & 8.2 (.039) & 23.8 (.046) & 35.4 (.048) & 7.1 (.032) & 8.3 (.030) & 24.1 (.038) & 31.2 (.050) \\
    AltEst2 & Prior & 12.2 (.032) & 24.8 (.047) & 48.8 (.093) & {\bf 78.8} (.020) & 15.1 (.015) & 34.7 (.029) &   {\bf 59.8} (.012) & {\bf 82.9} (.013) \\
          & Err & 7.2 (.024) & 14.7 (.041) & {\bf 12.9} (.079) & {\bf 12.3} (.015) & 5.6 (.012) & 8.0 (.026) & {\bf 7.3} (.011) & {\bf 8.3} (.011) \\
    KM1   & Prior & 27.8 (.013) & 50.8 (.016) & {\bf 57.9} (.280) & 1.6 (.022) & 23.7 (.014) & 45.6 (.016) & 68.5 (.021) & 11.9 (.129) \\
          & Err & 4.4 (.008) & 23.6 (.111) & 39.6 (.086) & 79.7 (.027) & 4.6 (.006) & 8.7 (.012) & 23.8 (.031) & 69.0 (.138) \\
    KM2   & Prior & {\bf 21.6} (.011) & {\bf 36.6} (.014) & 32.7 (.177) & 1.3 (.000) & {\bf 20.9} (.012) & {\bf 37.5} (.016) & 47.7 (.033) & 3.7 (.060) \\
          & Err & {\bf 3.8} (.005) & {\bf 7.3} (.012) & 30.6 (.168) & 80.0 (.000) & 4.4 (.006) & {\bf 7.8} (.016) & 16.0 (.041) & 78.0 (.052) \\
    \hline
          &  & {\tt 0 vs. 7} & & & &{\tt 0 vs. 8} & & &  \\
    \hline
    True  & Prior & 20 (-) & 40 (-) & 60 (-) & 80 (-) & 20 (-) & 40 (-) & 60 (-) & 80 (-) \\
          & Err & {\bf 2.2} (.007) & 6.5 (.015) & 10.2 (.020) & 15.7 (.018) & {\bf 3.5} (.006) & 8.2 (.012) & 17.0 (.020) & 18.1 (.009) \\
    AltEst1   & Prior & 16.1 (.042) & 36.1 (.021) & 35.3 (.052) & 48.0 (.057) & {\bf 17.6} (.031) & 35.2 (.011) & 38.1 (.060) & 42.3 (.048) \\
          & Err & 4.5 (.033) & 4.9 (.023) & 24.6 (.050) & 31.2 (.056) & 4.3 (.019) & {\bf 6.6} (.010) & 22.1 (.056) & 36.2 (.043) \\
    AltEst2 & Prior & 14.4 (.019) & 36.3 (.010) & {\bf 58.2} (.011) & {\bf 80.3} (.012) & 12.2 (.019) & 25.1 (.035) &   {\bf 54.7} (.034) & {\bf 78.7} (.017) \\
          & Err & 4.7 (.017) & {\bf 3.7} (.008) & {\bf 3.8} (.011) & {\bf 5.9} (.011) & 7.1 (.015) & 14.0 (.032) & {\bf 9.0} (.026) & {\bf 11.8} (.014) \\
    KM1   & Prior & {\bf 20.7} (.011) & {\bf 42.6} (.016) & 65.9 (.022) & 16.5 (.128) & 25.4 (.010) & 48.6 (.014) & 68.0 (.162) & 4.4 (.076) \\
          & Err & 2.6 (.009) & 6.2 (.015) & 19.2 (.031) & 63.6 (.144) & 4.4 (.008) & 11.7 (.091) & 36.7 (.060) & 76.8 (.081) \\
    KM2   & Prior & 18.5 (.011) & 35.3 (.016) & 47.8 (.029) & 8.7 (.090) & 21.1 (.010) & {\bf 36.6} (.015) & 40.9 (.101) & 1.3 (.000) \\
          & Err & 3.0 (.014) & 5.2 (.021) & 14.8 (.052) & 73.4 (.083) & 3.7 (.006) & 7.0 (.012) & 22.9 (.101) &80.0 (.000) \\
  \end{tabular}
  }
\captionsetup{labelformat=empty}
\caption{Table 3: {\tt digits} 0 vs. 1, 2, 3, 4, 5, 6, 7, 8 with dimension $200$ via PCA: The estimated class-prior (Prior: $\%$) and the error rate of classification in test data (Error: $\%$) are shown. Best and equivalent methods (under 5$\%$ t-test) are bold.}
\end{center}
\end{figure}
\begin{figure}
\begin{center}
\scalebox{0.7}[0.7]{
  \begin{tabular}{lc||cccc||cccc}
    \hline
          &  & {\tt mushroom} & & & &{\tt usps} & & &  \\
    \hline
    True  & Prior & 20 (-) & 40 (-) & 60 (-) & 80 (-) & 20 (-) & 40 (-) & 60 (-) & 80 (-) \\
          & Err & 1.0 (.006) & 3.6 (.015) & 6.3 (.021) & 18.9 (.018) & {\bf 7.7} (.009) & {\bf 13.0} (.013) & {\bf 18.1} (.020) & 21.2 (.009) \\
    AltEst1 & Prior & {\bf 18.8} (.006) & 35.1 (.016) & 43.9 (.109) & 51.8 (.160) & {\bf 19.0} (.020) & {\bf 29.6} (.022) & 43.1 (.031) & 56.9 (.036) \\
          & Err & {\bf 0.9} (.005) & 4.3 (.016) & 16.5 (.085) & 29.2 (.127) & 8.0 (.012) & 17.2 (.024) & 23.7 (.027) & 29.4 (.031) \\
    AltEst2 & Prior & 17.0 (.026) & {\bf 37.2} (.013) & {\bf 58.2} (.011) & {\bf 80.1} (.014) & 13.8 (.022) & 27.5 (.031) &  {\bf 46.0} (.090) & {\bf 74.0} (.030) \\
          & Err & 2.1 (.024) & {\bf 1.9} (.012) & {\bf 2.4} (.011) & {\bf 5.0} (.014) & 10.8 (.014) & 18.3 (.024) & 21.6 (.068) & {\bf 18.1} (.023) \\
    KM1   & Prior & 16.9 (.012) & 36.5 (.035) & 43.8 (.050) & 53.2 (.204) & 1.3 (.000) & 17.2 (.050) & 34.7(.027) & 69.9 (.067) \\
          & Err & 1.5 (.020) & 6.7 (.049) & 17.9 (.061) & 21.0 (.158) & 19.5 (.004) & 26.0 (.050) & 29.1 (.029) & 23.6 (.029) \\
    KM2   & Prior & 14.4 (.013) & 28.3 (.027) & 25.6 (.040) & 23.1 (.010) & 70.5 (.011) & 70.0 (.070) & 42.9 (.099) & 22.4 (.009) \\
          & Err & 3.7 (.019) & 11.0 (.046) & 35.2 (.060) & 59.8 (.074) & 80.0 (.000) & 55.8 (.094) & 25.1 (.059) & 71.8 (.020) \\
    \hline
          &  & {\tt waveform} & & & &{\tt spambase} & & &  \\
    \hline
    True  & Prior & 20 (-) & 40 (-) & 60 (-) & 80 (-) & 20 (-) & 40 (-) & 60 (-) & 80 (-) \\
          & Err & {\bf 12.2} (.019) & {\bf 13.8} (.019) & 13.6 (.024) & 12.8 (.024) & {\bf 13.2} (.107) & 56.5 (.063) & {\bf 39.5} (.041) & 30.5 (.081) \\
    AltEst1   & Prior & 41.6 (.035) & 52.2 (.029) & {\bf 61.0} (.109) & 76.2 (.017) & 40.1 (.303) & {\bf 45.3} (.228) & {\bf 56.3} (.156) & 60.1 (.143) \\
          & Err & 22.5 (.048) & 15.4 (.024) & 13.9 (.023) & 12.4 (.025) & 31.8 (.261) & {\bf 33.8} (.145) & 40.6 (.045) & {\bf 25.5} (.076) \\
    AltEst2 & Prior & 41.4 (.035) & 53.5 (.029) & 61.5 (.023) & 76.4 (.047) & 51.7 (.332) & 51.1 (.272) &  {\bf 57.1} (.183) & 59.4 (.148) \\
          & Err & 23.3 (.049) & 15.5 (.025) & 14.2 (.023) & 13.2 (.036) & 45.0 (.257) & 41.4 (.124) & 43.0 (.051) & {\bf 26.1} (.094) \\
    KM1   & Prior & {\bf 30.8} (.010) & {\bf 49.1} (.012) & 67.2 (.013) & 85.7 (.204) & {\bf 33.4} (.041) & 52.7 (.041) & 71.2 (.046) & {\bf 86.5} (.022) \\
          & Err & 15.6 (.022) & 15.0 (.021) & {\bf 12.8} (.021) & {\bf 9.8} (.023) & 61.6 (.152) & 48.0 (.040) & 40.4 (.038) & 30.5 (.089) \\
    KM2   & Prior & 34.4 (.009) & 50.5 (.011) & 66.0 (.012) & {\bf 78.7} (.014) & 42.2 (.035) & 54.5 (.038) & 63.3 (.054) & 61.4 (.107) \\
          & Err & 17.9 (.026) & 14.8 (.021) & 13.3 (.022) & 13.1 (.024) & 55.6 (.057) & 48.5 (.033) & 40.0 (.037) & {\bf 25.8} (.094) \\
  \end{tabular}
  }
  \captionsetup{labelformat=empty}
  \caption{Table 4: {\tt mushrooms}, {\tt usps}, {\tt waveform} and {\tt spambase} ({\tt mushrooms} and {\tt usps} with dimension $100$ via PCA): The estimated class-prior (Prior: $\%$) and the error rate of classification in test data (Error: $\%$) are shown. Best and equivalent methods (under 5$\%$ t-test) are bold.}
  \end{center}
\end{figure}

\clearpage

\appendix

\section{Proof of Lemma 1}
To prove the lemma, the KKT condition is important. Here, we briefly explain the KKT condition. We consider the following  optimization problem with continuous convex functions, $a(x)$ and $b(x)$:
\begin{align*}
  \min_{x} &\ a(x)\\
  s.t. & \ b(x) \leq 0.
\end{align*}
Then we consider its Lagrange function:
\begin{align*}
  \inf_{x} \sup_{\lambda \geq 0}a(x) + \lambda b(x),
\end{align*}
where $\lambda$ is the Lagrangian multiplier. If $b(x) > 0$, $\lambda b(x) \rightarrow +\infty$ as $\lambda \rightarrow +\infty$.  If $b(x) \leq 0$, $\lambda b(x) = 0$ when $\lambda = 0$ as a result of minimization. As a result, we can derive the KKT condition as follows:
\begin{align*}
  &a'(x^{*}) + \lambda b'(x^{*}) = 0,\\
  &b(x^{*}) \leq 0,\\
  &\lambda \geq 0,\\
  &\lambda b(x^{*}) = 0,
\end{align*}
where $x^{*}$ is the optimal value.

We apply the KKT condition for functionals to the proof of Lemma 1.

\begin{proof}
Let us consider maximizing $J$ for $f \in (0,1-\epsilon]$. We introduce Lagrangian variables $\alpha(\bm{x})$ and $\beta(\bm{x})$, which are functions $\mathbb{R}^d \to \mathbb{R}$ with range $(0, +\infty)$, and the Lagrange functional 
\begin{align*}
    &\mathcal{L}^{\pi^{(k)}}(f;\alpha,\beta)\nonumber\\
    & = \int \Big\{-\pi^{(k)}p(\bm{x}|y=+1)(\log f(\bm{x}) - \log (1-f(\bm{x})))\nonumber\\
    & - \log (1-f(\bm{x}))p(\bm{x}) + \alpha(\bm{x})(-1+\epsilon + f(\bm{x}))-\beta(\bm{x})f(\bm{x}) \Big\}\mathrm{d}\bm{x}.
\end{align*}
Then we consider the maximization of $\mathcal{L}$ with respect to $\alpha$ and $\beta$, i.e.,
\begin{align*}
  \inf_{f} \sup_{\alpha, \beta}\mathcal{L}^{\pi^{(k)}}(f;\alpha,\beta).
\end{align*}
Now we obtain 
\begin{align*}
  \inf_{f}\mathcal{L}^{\pi^{(k)}}(f;\alpha,\beta) = 
  \begin{cases}
    R^{\pi^{(k)}}(f) & if\ 0\leq f \leq 1-\epsilon\\
    +\infty & otherwise.
  \end{cases}
\end{align*}
Therefore, 
\begin{align*}
\inf_{f, 0\leq f\leq 1-\epsilon} R^{\pi^{(k)}}(f) = \inf_{f} \sup_{\alpha, \beta}\mathcal{L}^{\pi^{(k)}}(f;\alpha,\beta).
\end{align*}
Next, we consider the dual problem defined as follows:
\begin{align*}
  \sup_{\alpha, \beta}\inf_{f}\mathcal{L}^{\pi^{(k)}}(f;\alpha,\beta).
\end{align*}
Because of the convexity of $R^{\pi^{(k)}}(f)$ for $f$, the following equality can be obtained \cite{rockafellar2015convex}.
\begin{align*}
  \inf_{f} \sup_{\alpha, \beta}\mathcal{L}^{\pi^{(k)}}(f;\alpha,\beta) = \sup_{\alpha, \beta}\inf_{f}\mathcal{L}^{\pi^{(k)}}(f;\alpha,\beta).
\end{align*}
Hence, we discuss the solution of the dual problem.

Given $\alpha$ and $\beta$, we apply the Euler-Lagrange equation for calculating $\inf_{f} \mathcal{L}(f;\alpha,\beta)$ \cite{gelfand2000calculus}. The maximizer $f^{(k+1)*}(\bm{x})$ satisfies the following equation,
\begin{align*}
  &\pi^{(k)}p(\bm{x}|y=+1)\left(\frac{1}{f^{(k+1)*}(\bm{x})}+\frac{1}{1-f^{(k+1)*}(\bm{x})} \right) - \frac{p(\bm{x})}{1-f^{(k+1)*}(\bm{x})} - \alpha(\bm{x}) + \beta(\bm{x}) = 0.
\end{align*}
Therefore, 
\begin{align}
\label{equation}
  &(\beta(\bm{x})-\alpha(\bm{x}))(f^{(k+1)*}(\bm{x}))^2 + (p(\bm{x})+\alpha(\bm{x})-\beta(\bm{x}))f^{(k+1)*}(\bm{x}) - \pi^{(k)}p(\bm{x}|y=+1) = 0.
\end{align}
As a result, we can derive the following KKT condition:
\begin{align*}
  &(\beta^{(k+1)*}(\bm{x})-\alpha^{(k+1)*}(\bm{x}))(f^{(k+1)*}(\bm{x}))^2 + (p(\bm{x})+\alpha^{(k+1)*}(\bm{x}) -\beta^{(k+1)*}(\bm{x}))f^{(k+1)*}(\bm{x}) - \pi^{(k)}p(\bm{x}|y=+1) = 0,\\
  &f^{(k+1)*}(\bm{x}) \leq 1-\epsilon,\\
  &f^{(k+1)*}(\bm{x}) \geq 0,\\
  &\alpha^{(k+1)*}(\bm{x})) \geq 0,\\
  &\beta^{(k+1)*}(\bm{x}) \geq 0,\\
  &\alpha^{(k+1)*}(\bm{x})(f^{(k+1)*}(\bm{x}) - 1-\epsilon) = 0,\\
  &\beta^{(k+1)*}(\bm{x}) f^{(k+1)*}(\bm{x}) = 0,
\end{align*}
where $f^{(k+1)*}(\bm{x})$, $\alpha^{(k+1)*}(\bm{x})$ and $\beta^{(k+1)*}(\bm{x})$ are the optimal functions, and the following results:
\begin{enumerate}
  \item If $\alpha^{(k+1)*}(\bm{x}) = \beta^{(k+1)*}(\bm{x}) = 0$, by (\ref{equation}), $f^{(k+1)*}(\bm{x})=\frac{\pi^{(k)}p(\bm{x}|y=+1)}{p(\bm{x})}$. This solution satisfies the constraint only when $\bm{x}$ is in the domain $D^{(k)}$, where $D^{(k)} = \{x|\pi^{(k)}p(\bm{x}|y=1) \leq (1-\epsilon)p(\bm{x}) \}$. 
  \item If $\alpha^{(k+1)*}(\bm{x}) > 0$ and $\beta^{(k+1)*}(\bm{x}) = 0$, by (\ref{equation}), $f^{(k+1)*}(\bm{x})=1-\epsilon$. It is because $\inf_{f}\mathcal{L}(f;\alpha,\beta) \rightarrow +\infty$ as $\alpha^{(k+1)*}(\bm{x}) \rightarrow +\infty$ if $f^{(k+1)*}(\bm{x})<1-\epsilon$. $\alpha^{(k+1)*}(\bm{x}) > 0$ holds only when $\bm{x}$ is in the domain $\mathbb{R}\backslash D^{(k)}$, where $\mathbb{R}\backslash D^{(k)} = \{x|\pi^{(k)}p(\bm{x}|y=1) < (1-\epsilon)p(\bm{x}) \}$, because 
  \begin{align*}
  &-\alpha^{(k+1)*}(\bm{x})(1-\epsilon)^2+ (p(\bm{x})+\alpha^{(k+1)*}(\bm{x}))(1-\epsilon) - \pi^{(k)}p(\bm{x}|y=+1)=0\\
  &\Leftrightarrow\ -\alpha^{(k+1)*}(\bm{x})(1-\epsilon)^2+ \alpha^{(k+1)*}(\bm{x})(1-\epsilon)+p(\bm{x})(1-\epsilon) - \pi^{(k)}p(\bm{x}|y=+1) = 0\\
  &\Leftrightarrow\ \alpha^{(k+1)*}(\bm{x}) = \frac{p(\bm{x})(1-\epsilon) - \pi^{(k)}p(\bm{x}|y=+1)}{(1-\epsilon)^2+ (1-\epsilon)}.
  \end{align*}
  \item If $\alpha^{(k+1)*}(\bm{x}) = 0$ and $\beta^{(k+1)*}(\bm{x}) > 0$, by (\ref{equation}), $f^{(k+1)*}(\bm{x})=0$. It is because $\inf_{f}\mathcal{L}(f;\alpha,\beta) \rightarrow +\infty$ as $\beta(\bm{x}) \rightarrow +\infty$ if $f^{(k+1)*}(\bm{x}) > 0$. 
  This solution satisfies (\ref{equation}), only when $\pi^{(k)}p(\bm{x}|y=+1)=0$ because 
  \begin{align*}
  &\beta^{(k+1)*}(\bm{x})(0)^2 + (p(\bm{x})\nonumber\\
  &-\beta^{(k+1)*}(\bm{x}))0 - \pi^{(k)}p(\bm{x}|y=+1) = 0.
\end{align*}
\end{enumerate}
Otherwise, there is no feasible solution. As a result, the solution for the optimization problem $f^{(k+1)*}(\bm{x}) = \arg\min_{f(\bm{x})\in(0,1-\epsilon]} R^{\pi^{(k)}}(f)$ is
\begin{align*}
    f^{(k+1)*}(\bm{x}) =  \begin{cases}
      \frac{\pi^{(k)}p(\bm{x}|y=+1)}{p(\bm{x})} &(\bm{x}\in D^{(k)}),\\
       1-\epsilon &(\bm{x}\notin D^{(k)}),
    \end{cases}
\end{align*}
where $D^{(k)} = \{\bm{x}|\pi^{(k)}p(\bm{x}|y=1) \leq (1-\epsilon)p(\bm{x}) \}$.
\end{proof}

\section{Proof of Lemma 2}
\begin{proof}
By definition, 
\begin{align*}
&\pi^{(k)} =  \int \frac{\pi^{(k)}p(\bm{x}|y=+1)}{p(\bm{x})}p(\bm{x})\mathrm{d}\bm{x}, \nonumber\\
& \pi^{(k+1)} =  \int f^{*}(\bm{x})p(\bm{x})\mathrm{d}\bm{x}.
\end{align*}
    Then, we have
    \begin{align*}
        &\pi^{(k)} - \pi^{(k+1)}\nonumber\\
        &\quad = \int \left(\frac{\pi^{(k)}p(\bm{x}|y=+1)}{p(\bm{x})} - f^*(\bm{x})\right)p(\bm{x}) \mathrm{d}\bm{x}\nonumber\\
        &\quad = \int_{D} \left(\frac{\pi^{(k)}p(\bm{x}|y=+1)}{p(\bm{x})} - f^*(\bm{x})\right)p(\bm{x}) \mathrm{d}\bm{x} + \int_{\mathbb{R}\backslash D} \left(\frac{\pi^{(k)}p(\bm{x}|y=+1)}{p(\bm{x})} - f^*(\bm{x})\right)p(\bm{x}) \mathrm{d}\bm{x}\nonumber\\
        &\quad = \int_{\mathbb{R}\backslash D} \left(\frac{\pi^{(k)}p(\bm{x}|y=+1)}{p(\bm{x})} - (1-\epsilon)\right)p(\bm{x}) \mathrm{d}\bm{x}\nonumber\\
        &\quad = \int \left({\pi^{(k)}p(\bm{x}|y=+1)} - (1-\epsilon)p(\bm{x})\right)_+ \mathrm{d}\bm{x}.
    \end{align*}
    $\int_{\mathbb{R}\backslash D} \left(\frac{\pi^{(k)}p(\bm{x}|y=+1)}{p(\bm{x})} - (1-\epsilon)\right)p(\bm{x}) \mathrm{d}\bm{x}$ is strictly positive because it takes the integral on the domain of $\bm{x}$, which is $\{\bm{x}|\pi^{(k)}p(\bm{x}|y=1) - (1-\epsilon)p(\bm{x}) > 0 \}$.
\end{proof}

\section{Proof of Theorem 1}
\begin{proof}
Before showing the proof of the theorem, we prove the following lemma.
\begin{lmm}
For $a \geq b>0$ and $n \in \{1,2,\dots\}$, the following inequality holds.
\begin{align*}
    \frac{1}{a^n} - \frac{b}{a^{n+1}} \leq  \frac{1}{b^n} - \frac{1}{a^n}.
\end{align*}
\end{lmm}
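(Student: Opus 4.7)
The plan is to rewrite both sides over common denominators and then exploit the standard geometric-series factorization of $a^n - b^n$. Combining the LHS gives
\[
\frac{1}{a^n} - \frac{b}{a^{n+1}} = \frac{a - b}{a^{n+1}},
\]
and the RHS is
\[
\frac{1}{b^n} - \frac{1}{a^n} = \frac{a^n - b^n}{a^n b^n}.
\]
Both sides are nonnegative since $a \geq b > 0$, so the inequality is trivial when $a = b$; I would dispose of that case first.

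Assuming $a > b$, I would apply the identity $a^n - b^n = (a - b)\sum_{i=0}^{n-1} a^{n-1-i} b^i$ and cancel the positive factor $a - b$ from both sides. What remains to show is
\[
\frac{1}{a^{n+1}} \leq \frac{\sum_{i=0}^{n-1} a^{n-1-i} b^i}{a^n b^n},
\]
equivalently, after clearing denominators, $b^n / a \leq \sum_{i=0}^{n-1} a^{n-1-i} b^i$.

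The final step is a one-line estimate: since $b \leq a$, we have $b^n \leq a^n$, hence $b^n/a \leq a^{n-1}$, which is already the first term of the sum on the right (the $i = 0$ term). The remaining terms are nonnegative, so the inequality is immediate. There is no real obstacle in this lemma; it is pure algebraic manipulation, and the only thing to be careful about is not dividing by $a - b$ without separating the equality case.
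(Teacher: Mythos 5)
Your proof is correct. It takes a mildly different algebraic route from the paper's: you put each side over a common denominator, factor $a^n-b^n=(a-b)\sum_{i=0}^{n-1}a^{n-1-i}b^i$, cancel the factor $a-b$ (hence your separate treatment of $a=b$), and finish by noting $b^n/a\le a^{n-1}$, the first term of the sum. The paper instead starts from $a^n\ge b^n$, \emph{multiplies} by $a-b$ to get $a^n(a-b)\ge b^n(a-b)$, then strengthens the left side via $b(b/a)^{n-1}\le b$ and divides through by $a^{n+1}b^n$; because it only ever multiplies by $a-b$, it needs no case split. Both arguments ultimately rest on the same fact ($b^n\le a^n$ together with positivity), so neither buys anything substantive over the other; yours is perhaps slightly more transparent about where the inequality comes from, while the paper's avoids the degenerate-case bookkeeping. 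Either way the lemma is, as you say, pure algebra, and your write-up is complete and error-free.
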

\begin{proof}
\begin{align*}
    a \geq b &\Leftrightarrow a^n \geq b^n\\
    &\Leftrightarrow a^n(a-b) \geq b^n(a-b)\\
    &\Rightarrow a^n\left(a-b\left(\frac{b}{a}\right)^{n-1}\right) \geq b^n(a-b)\\
    &\Leftrightarrow \frac{a}{b^n} - \frac1{a^{n-1}} \geq \frac{1}{a^{n-1}} - \frac{b}{a^n}\\
    &\Leftrightarrow \frac{1}{b^n} - \frac1{a^n} \geq \frac{1}{a^n} - \frac{b}{a^{n+1}}.
\end{align*}
\end{proof}
Then, we prove $\pi^{(k)} \geq \pi_{\max}$. This can be proved by mathematical induction. Let $f^{(k)}(\bm{x})$ be a classifier learned at the $k$-th round, namely,
\begin{align*}
    f^{(k)}(\bm{x}) = \min\left(\frac{\pi^{(k)} p(\bm{x}|y=+1)}{p(\bm{x})}, 1-\epsilon \right).
\end{align*}
Then, if $\pi^{(k)} \geq \pi_{\max}$, we have
\begin{align*}
    \frac{\pi^{(k)} p(\bm{x}|y=+1)}{p(\bm{x})} \geq \frac{\pi_{\max} p(\bm{x}|y=+1)}{p(\bm{x})}.
\end{align*}
Since $1-\epsilon \geq \frac{\pi_{\max} p(\bm{x}|y=+1)}{p(\bm{x})}$ holds by the definition of $\pi_{\max}$, we have
\begin{align*}
    \pi^{(k+1)} &= \int f(\bm{x})p(\bm{x}) \mathrm{d}\bm{x}\\
    &= \int \min\left(\frac{\pi^{(k)} p(\bm{x}|y=+1)}{p(\bm{x})}, 1-\epsilon \right) p(\bm{x}) \mathrm{d}\bm{x}\\
    &\geq \int \frac{\pi_{\text{max}} p(\bm{x}|y=+1)}{p(\bm{x})} p(\bm{x}) \mathrm{d}\bm{x}\\
    &= \pi_{\max}.
\end{align*}
Therefore, $\pi^{(k)} \geq \pi_{\max} \Rightarrow \pi^{(k+1)} \geq \pi_{\max}$. Considering that $\pi^{(0)} \geq \pi_{\max}$, we have $\pi^{(k)} \geq \pi_{\max}$ for all $k>0$.

Based on this fact, we can prove the convergence. By the definition of $\pi_{\max}$, there exists $\bm{x}^* \in \mathbb{R}^d$ such that
\begin{align*}
    \frac{\pi_{\max} p(\bm{x}^*|y=+1)}{p(\bm{x}^*)} = 1-\epsilon.
\end{align*}
Hence, let $g^{(k)}(\bm{x})$ be $g^{(k)}(\bm{x}) = {\pi^{(k)}p(\bm{x}|y=+1)} - (1-\epsilon)p(\bm{x})$, and we have
\begin{align*}
    &g^{(k)}(\bm{x}^*)= {\pi^{(k)}p(\bm{x}^*|y=+1)} - (1-\epsilon)p(\bm{x}^*)= (\pi^{(k)}-\pi_{\max})p(\bm{x}^*|y=+1).
\end{align*}
Since the assumption of Lipchitz continuity of probability density functions, $g^{(k)}(\bm{x})$ is $(\pi^{(k)}L_1 + (1-\epsilon)L_2)$-Lipchitz continuous, which implies
\begin{align*}
    g^{(k)}(\bm{x}) &\geq (\pi^{(k)}-\pi_{\max})p(\bm{x}^*|y=+1) - (\pi^{(k)}L_1 + (1-\epsilon)L_2)\|\bm{x}-\bm{x}^*\|_1\\
    &\geq (\pi^{(k)}-\pi_{\max})p(\bm{x}^*|y=+1) - (L_1 + L_2)\|\bm{x}-\bm{x}^*\|_1.\\
\end{align*}
Let $y_k$ be $y_k =\pi^{(k)}-\pi_{\max}$ and $p^*$ be $p^*=p(\bm{x}^*|y=+1)$. Then we have
\begin{align*}
    y_k - y_{k+1} &= \int (g^{(k)}(\bm{x}))_+ \mathrm{d}\bm{x} \quad(\because\text{Lemma 2})\\
    &\geq \int \left((\pi^{(k)}-\pi_{\max})p^* - (L_1 +L_2)\|x-x^*\|_1\right)_+ \mathrm{d}\bm{x}\\
    &\geq \int_{\|\bm{x}-\bm{x}^*\|_1 \leq \frac{y_k p^*}{2(L_1 + L_2)}} \frac{y_kp^*}2 \mathrm{d}\bm{x}\\
    &= \frac{y_k^{d+1}(p^*)^{d+1}}{2^{d+2}(L_1 + L_2)^d} > 0,
\end{align*}
where $d$ is the dimension of $\bm{x}$. Considering that $\pi^{(k)} > \pi_{\max}$, we have $y_k > y_{k+1} > 0$. Therefore, using Lemma 3, we have
\begin{align*}
    &y_k - y_{k+1} \geq \frac{y_k^{d+1}(p^*)^{d+1}}{2^{d+2}(L_1 + L_2)^d}\\
    \Leftrightarrow &\frac{1}{(y_k)^d} - \frac{y_{t+1}}{(y_k)^{d+1}} \geq \frac{(p^*)^{d+1}}{2^{d+2}(L_1 + L_2)^d}\\
    \Rightarrow &\frac{1}{(y_{k+1})^d} - \frac{1}{(y_k)^{d}} \geq \frac{(p^*)^{d+1}}{2^{d+2}(L_1 + L_2)^d}.
\end{align*}
Taking the sum of both sides for $t=1,\dots,T-1$ yields
\begin{align*}
    &\frac{1}{(y_T)^d} - \frac{1}{(y_1)^{d}} \geq \frac{(T-1)(p^*)^{d+1}}{2^{d+2}(L_1 + L_2)^d}\nonumber\\
    \Leftrightarrow & y_T \leq \frac{1}{\left(\frac{(T-1)(p^*)^{d+1}}{2^{d+2}(L_1 + L_2)^d} + \frac{1}{(y_1)^d}\right)^{\frac1d}}.
\end{align*}
Combined (\ref{equation}) with $y_T \geq 0$, we can conclude that $y_T \to 0$ as $T\to +\infty$, which implies $\pi^{(k)} \to \pi_{\max}$.
\end{proof}

\end{document}